\documentclass[letterpaper, 10 pt, conference]{ieeeconf}  

\IEEEoverridecommandlockouts                              

\usepackage{amsmath}
\usepackage{esint}
\usepackage{amssymb}
\usepackage{amsthm}                                                                   \usepackage{booktabs}   
\usepackage{multirow}   
\usepackage{graphicx}   
\usepackage{mathtools}
\usepackage{derivative}
\usepackage{xcolor}
\usepackage{bm}
\usepackage{physics} 
\usepackage[ruled,vlined]{algorithm2e}
\usepackage{optidef}
\usepackage{graphicx}   
\usepackage{caption}    
\usepackage{subcaption} 

\usepackage{siunitx} 
\usepackage[noadjust]{cite}
\usepackage{url}
\usepackage[hidelinks]{hyperref}

\usepackage{graphicx}
\usepackage[export]{adjustbox} 
\graphicspath{{figures/}}
\usepackage{stfloats}

\newtheorem{proposition}{Proposition}

\theoremstyle{definition}
\newtheorem{definition}{Definition}

\newtheorem{remark}{Remark}

\usepackage{bm}

\DeclareMathOperator*{\argmin}{arg\,min}

\title{\LARGE \bf
VLPSA: Vision-Language-Poisson-Safe Actions \\ for Full-Body Safety of Learned Policies}

\author{Meg Wilkinson$^{*}$, Emily Fourney$^{*}$, Joel W. Burdick, Aaron D. Ames%
\thanks{$^{*}$These authors contributed equally to this work.}%
\thanks{All authors are with the Department of Computing and Mathematical Sciences, California Institute of Technology, Pasadena, CA 91125, USA. {\tt\small \{mwilkins, efourney, burdick, ames\}@caltech.edu}}}

\begin{document}

\maketitle
\thispagestyle{empty}
\pagestyle{empty}




\newcommand{\re}{\mathbb{R}}


\newcommand{\dt}{\mathrm{d}t}
\newcommand{\dy}{\mathrm{d}y}
\newcommand{\dx}{\mathrm{d}x}
\newcommand{\dtau}{\mathrm{d}\tau}
\newcommand{\Cc}{\mathcal{C}}
\newcommand{\Cce}{\mathcal{C}_\varepsilon}
\newcommand{\he}{h_{\varepsilon}}

\newcommand{\Ac}{\mathcal{A}}
\newcommand{\pCc}{\partial \mathcal{C}}
\newcommand{\Bc}{\mathcal{B}}
\newcommand{\Tc}{\mathcal{T}}
\newcommand{\Dc}{\mathcal{D}}
\newcommand{\Oc}{\Omega}
\newcommand{\Occ}{\overline{\Omega}}
\newcommand{\pOc}{\partial \Omega}
\newcommand{\Ocext}{\Oc_\mathrm{ext}}
\newcommand{\Ocint}{\Oc_\mathrm{int}}
\newcommand{\Hc}{\mathcal{H}}
\newcommand{\Kc}{\mathcal{K}}
\newcommand{\Fc}{\mathcal{F}}
\newcommand{\Mc}{\mathcal{M}}
\newcommand{\Nc}{\mathcal{N}}
\newcommand{\Pc}{\mathcal{P}}
\newcommand{\Uc}{\mathcal{U}}
\newcommand{\Sc}{\mathcal{S}}
\newcommand{\Xc}{\mathcal{X}}
\newcommand{\Yc}{\mathcal{Y}}
\newcommand{\Vc}{\mathcal{V}}
\newcommand{\Zc}{\mathcal{Z}}
\newcommand{\Lc}{\mathcal{L}}
\newcommand{\Rm}{\mathcal{\mathbb{R}}}
\newcommand{\R}{\mathcal{\mathbb{R}}}
\newcommand{\Sp}{\mathcal{\mathbb{S}}}

\newcommand{\divv}{\nabla \cdot \vec{\bv}}
\newcommand{\hs}{h_\mathrm{\Sc}}

\newcommand{\vr}{\varepsilon}

\newcommand{\des}{{\operatorname{des}}}
\newcommand{\on}{{\operatorname{on}}}
\newcommand{\off}{{\operatorname{off}}}
\newcommand{\fl}{{\operatorname{FL}}}
\newcommand{\Lie}{\mathcal{L}}
\newcommand{\qp}{{\operatorname{QP}}}

\newcommand{\ie}{i.e., }
\newcommand{\todo}[1]{{\color{cyan} Todo: #1}}

\newcommand{\ba}{\mathbf{a}}
\newcommand{\bb}{\mathbf{b}}
\newcommand{\be}{\mathbf{e}}
\renewcommand{\bf}{\mathbf{f}} 
\newcommand{\bff}{\mathbf{f}}
\newcommand{\bg}{\mathbf{g}}
\newcommand{\bk}{\mathbf{k}}
\newcommand{\bp}{\mathbf{p}}
\newcommand{\bq}{\mathbf{q}}
\newcommand{\bu}{\mathbf{u}}
\newcommand{\bv}{\mathbf{v}}
\newcommand{\bvv}{\vec{\mathbf{v}}}
\newcommand{\bn}{\mathbf{n}}
\newcommand{\hbn}{\hat{\mathbf{n}}}

\newcommand{\bx}{\mathbf{x}}
\newcommand{\bz}{\mathbf{z}}
\newcommand{\br}{\mathbf{r}}
\newcommand{\bA}{\mathbf{A}}
\newcommand{\bB}{\mathbf{B}}
\newcommand{\bD}{\mathbf{D}}
\newcommand{\bC}{\mathbf{C}}
\newcommand{\bF}{\mathbf{F}}
\newcommand{\bJ}{\mathbf{J}}
\newcommand{\bG}{\mathbf{G}}
\newcommand{\bK}{\mathbf{K}}
\newcommand{\bP}{\mathbf{P}}
\newcommand{\bW}{\mathbf{W}}
\newcommand{\bw}{\mathbf{w}}
\newcommand{\bd}{\mathbf{d}}
\newcommand{\bvy}{\vec{\by}}
\newcommand{\bty}{\tilde{\by}}
\newcommand{\bbeta}{\boldsymbol{\eta}}
\newcommand{\mb}[1]{\mathbf{#1}}

\newcommand{\bY}{\mathbf{Y}}
\newcommand{\by}{\mathbf{y}}
\newcommand{\byobs}{\mathbf{y}_\mathrm{obs}}

\newcommand{\bxd}{\bx_\mathrm{d}}
\newcommand{\bxobs}{\bx_\mathrm{obs}}
\newcommand{\md}{\mathrm{d}}

\newcommand{\Uxd}{U_{\mathrm{d}}}
\newcommand{\Uobs}{U_{\mathrm{obs}}}
\newcommand{\Uapf}{U_{\mathrm{APF}}}

\newcommand{\GradUxd}{\nabla U_{\mathrm{d}}}
\newcommand{\GradUobs}{\nabla U_{\mathrm{obs}}}
\newcommand{\GradUapf}{\nabla U_{\mathrm{APF}}}

\newcommand{\cmax}{c_\mathrm{max}}
\newcommand{\cmin}{c_\mathrm{min}}

\newcommand{\hn}{h_\mathrm{n}}
\newcommand{\Dhn}{D h_\mathrm{n}}
\newcommand{\Dh}{D h}
\newcommand{\Dhd}{D h_\mathrm{d}}

\begin{abstract}

Vision-language-action (VLA) models enable increasingly general-purpose robotic manipulation, but such learned policies do not provide safety guarantees for collision avoidance---especially in environments outside of training distributions. This work presents Vision-Language-Poisson-Safe Actions (VLPSA), a safety filtering framework that provides full-body safety for VLA policies in cluttered and dynamic environments without retraining. VLPSA synthesizes Poisson Safety Functions (PSF) online from perception data, yielding a Control Barrier Function (CBF) that is enforced through a CBF-QP safety filter over the full body and any grasped object, treated as an extension of the final robot link. To enable real-time deployment while maintaining fine spatial resolution in critical task regions, VLPSA combines dual resolutions of this PSF using Boolean CBF compositions. We evaluate VLPSA on SafeLIBERO against safety-filtering baselines, where it achieves the highest collision avoidance rate among the evaluated methods, increasing collision avoidance from $23.1\%$ for the base $\pi_{0.5}$ policy to $91.2\%$ while surpassing its task success rate. We further deploy VLPSA on a Franka FR3 in cluttered scenes with dynamic obstacles and human interference, demonstrating real-time full-body safety during manipulation tasks.

\end{abstract}
\section{Introduction}

Vision-language-action (VLA) models are advancing general purpose robotic autonomy, with a growing ability to generalize across tasks and complex environments.  Popular models such as $\pi_{0.5}$ \cite{intelligence2025pi05visionlanguageactionmodelopenworld} and OpenVLA \cite{kim2024openvlaopensourcevisionlanguageactionmodel} demonstrate manipulation policies capable of diverse, long-horizon tasks. Despite this performance, learned polices do not provide formal safety guarantees, leaving a critical gap between their demonstrated capabilities and reliable deployment alongside humans and in unstructured environments. Closing this gap requires deciding how and where safety enters an increasingly capable learned policy. This question has been approached from several directions \cite{thakker2025riskguideddiffusiondeployingrobot} without clear consensus, particularly for contact-rich manipulation tasks.  VLA policies produce \textit{action} chunks at inference rate that is below the robot's desired control rate. A safety mechanism acting only on chunks is limited in its ability to respond to obstacles entering the workspace, which is a particular regime of interest for manipulation tasks executed alongside humans. This motivates enforcing safety at the level of individual actions, rather than longer-horizon action chunks.

Existing approaches primarily train safety into the policy or via a learned safety representation. Policy-level learning approaches include training on safe demonstration data \cite{cui2026liberosafetycomprehensivebenchmarkphysical} and applying constrained policy optimization \cite{zhang2026safevlasafetyalignmentvisionlanguageaction}. Safety can further be learned as a separate constraint, with latent safety filters offering a method to synthesize Hamilton-Jacobi reachability value functions \cite{bansal2017hamiltonjacobireachabilitybriefoverview} over world-model representations \cite{Nakamura_2025}. These approaches allow constraints without analytic specifications to be enforced. However, the safety behaviors depend on the fidelity of the learned policies or world models \cite{seo2025uncertaintyawarelatentsafetyfilters}, and thus do not provide independently specified geometric guarantees for previously unseen configurations. In contrast, we enforce safety constraints at deployment, at a control rate, over a geometric representation of the environment. Our method requires neither policy modification nor safety-specific data to be learned, while supplying formal guarantees of safety. 


\begin{figure}[t]
   \centering
   \includegraphics[width=.99 \columnwidth]{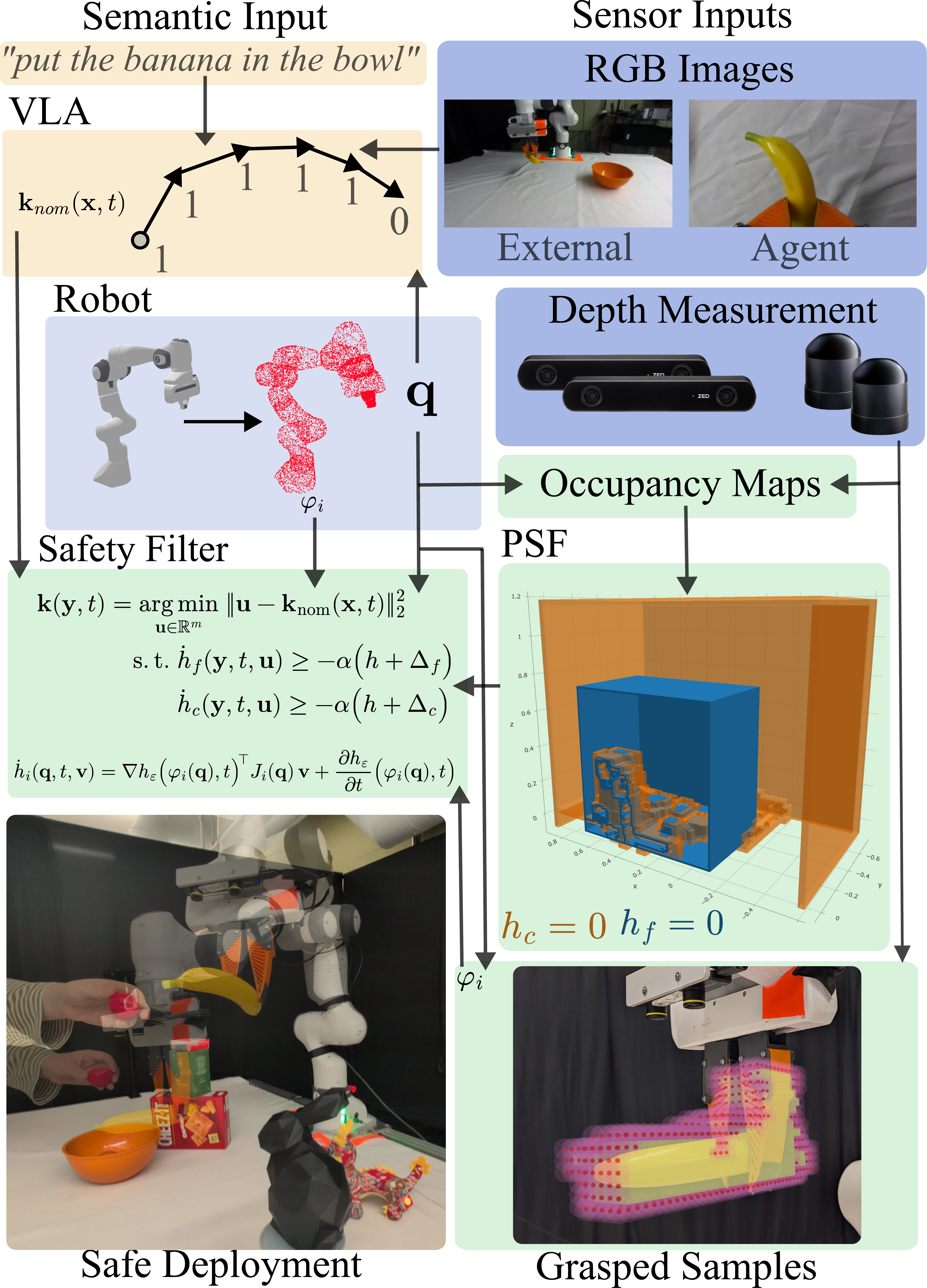}
    \caption{Pipeline for VLPSA: Sensory and semantic input and robot state are sent to the VLA for a nominal trajectory and gripper state. Depth sensors are fused with robot state information to create an occupancy map for the synthesis of PSFs at varying resolutions for safety filtering. When grasping, live samples are generated and updated to protect the held object, incorporated into the same safety filter.}
    \label{fig:hero}
\vspace{-7mm}
\end{figure}


Safety filters based on Control Barrier Functions (CBFs) provide a principled approach to safety-critical control \cite{ames_2017, ADA-SC-ME-GN-KS-PT:19}. They enforce forward invariance of a defined safe set while providing a modular method for minimally modifying nominal commands. Vision-Language-Safe Action (VLSA) applies safety filters to VLA policies, substantially improving collision avoidance, and in some cases, task performance \cite{hu2025vlsa}. However, the collision avoidance is limited to the end effector in static scenes. Extensions to dynamic obstacles have been shown in simulation \cite{park2026modelknowsattentionguidedsafety}, but full-body collision avoidance of VLA policies with 
moving obstacles has yet to be demonstrated on hardware. Held objects have been incorporated into CBF-based avoidance for deformable co-manipulation \cite{Aksoy_2026}, though as object-centric constraints under hand-designed controllers rather than as incorporated into the same full-body safety representation used for the robot.

Poisson Safety Functions (PSFs) \cite{bahati2025dynamic} provide a method to synthesize CBFs online.  One solves an elliptic partial differential equation (PDE), Poisson's equation, over a domain constructed 
from perception data to yield a functional and dynamic safety representation of the environment 
\cite{bena2025geometry}. Recent developments incorporate full-body safety guarantees for robot manipulators via sampling and buffering in dynamic scenes \cite{wilkinson2026fullbodydynamicsafetyrobot}. This formulation assumes a fixed body geometry. Manipulator's effective geometry changes when an object is grasped, so a sample set defined over the links alone ceases to cover the system mid-task. PSFs also introduce a tradeoff between spatial resolution of the safety representation and the cost of the PDE solve: cluttered scenes demand high resolution, real-time operation favors coarse grids.

We present \textit{Vision-Language-Poisson-Safe Actions} (VLPSA), a unified safety framework applied to learned polices providing conditions for guaranteed dynamic collision avoidance for both the full-robot body and its grasped objects. These guarantees are geometric and conditioned on the perception pipeline supplying a conservative bound on obstacle geometry; we characterize sensitivity to this assumption in Section~\ref{Hardware}. 

Our contributions are as follows: 

\setlength{\leftmargini}{1em}
\begin{itemize}
    \item Full-body manipulator PSF-CBF safety filter for VLA policies in dynamic, cluttered environments that safely filters the action chunk with no retraining of the nominal policy needed.
    \item A multigrid PSF formulation composing a coarse grid over the reachable workspace with a nested fine grid over the task region, combined through a Boolean `OR' composition \cite{Ong_2025}. 
    \item Extension of the robot representation to include unseen \textit{a priori} grasped objects. 
    \item Hardware validation on a Franka FR3 running a fine-tuned $\pi_{0.5}$ policy, in cluttered scenes and with obstacles entering the workspace during execution, paired with evaluation against filtering baselines on SafeLIBERO.
\end{itemize}

\section{Preliminaries}
\subsection{Control Barrier Functions}
We consider a non-linear control affine system,
\begin{equation}\label{affine}
    \dot{\bx}  = \bf(\bx) + \bg(\bx)\bu,
\end{equation}
with state $\bx\in \mathbb{R}^n$ and control input $\bu\in\mathbb{R}^m$, where $\bf: \mathbb{R}^n \rightarrow \mathbb{R}^n$  and $\bg: \mathbb{R}^n \rightarrow \mathbb{R}^{n\times m}$ are both locally Lipschitz. A locally Lipschitz control law $\bk: \mathbb{R}^n \times \mathbb{R}_+ \rightarrow \mathbb{R}^m$ renders a closed-loop system that admits a unique solution $t \mapsto \bx(t)$; we assume a solution exists $\forall t\geq 0$ \cite{perko2013differential}. A safe set can be defined as the unoccupied regions in the workspace. We encode the zero-super level set by a continuously differentiable function $h: \mathbb{R}^n \times \mathbb{R}_+ \rightarrow \mathbb{R}$,
\[ \mathcal{C}_t = \{ \bx\in\mathbb{R}^n | h(\bx,t) \geq 0 \}.\]
We seek a control input that can render the safe set forward invariant: given an initial condition $\bx(0) \in \mathcal{C}_0$, the controlled system
remains in the safe set $\bx(t) \in \mathcal{C}_t$ for all $t >0$.
\begin{definition}[Control Barrier Functions \cite{ADA-XX-JWG-PT:17}]

Let $\mathcal{C}_t$ be the time-varying $0$-superlevel set of a
continuously differentiable $h:\mathbb{R}^n\!\times\! \mathbb{R}_+\to\mathbb{R}$ satisfying
$\nabla h(\bx,t)\neq 0$ whenever $h(\bx,t)\!=\!0$. Then $h$ is a
time-varying Control Barrier Function for \eqref{affine}
if there exists $\alpha\!\in\!\mathcal{K}^e_\infty$ such that for all
$(x,t)\in\mathbb{R}^n\!\times\!\mathbb{R}_+$,
\begin{align}
\sup_{\bu\in\mathbb{R}^m} \dot h(\bx,t,\bu) &\triangleq
\nabla h(\bx,t)\cdot\big(f(\bx)+g(\bx)\bu\big) + \tfrac{\partial h}{\partial t}(\bx,t) \nonumber\\
&\geq -\alpha\big(h(\bx,t)\big).
\label{eq:cbf}
\end{align}
\end{definition}

Satisfying \eqref{eq:cbf} point-wise renders $\mathcal{C}_t$ forward
invariant \cite{ADA-SC-ME-GN-KS-PT:19}. Given a nominal controller $k_\mathrm{nom}$,
safety is classically enforced by the CBF-QP, which returns the
minimum-norm modification of $k_\mathrm{nom}$ satisfying \eqref{eq:cbf}.
When a task is specified by a Control Lyapunov Function (CLF)
$V:\mathbb{R}^n\to\mathbb{R}_+$, stability and safety can be  combined in a single optimization problem.
Relaxing the CLF condition with a slack variable $\delta\geq 0$ preserves
feasibility when the two objectives conflict, yielding the CLF-CBF-QP
\cite{ames_2013_towards}:
\begin{equation}
\begin{aligned}
(\bu^*,\delta^*) = \underset{\bu\in\R^m,\,\delta\geq 0}{\arg\min}\quad
& \|\bu\|_2^2 + p\,\delta^2 \\
\text{s.t.}\quad
& \dot V(\bx,u) \leq -\gamma V(\bx) + \delta, \\
& \dot h(\bx,t,u) \geq -\alpha\big(h(\bx,t)\big),
\end{aligned}
\label{eq:clfcbfqp}
\end{equation}
where $\gamma>0$ sets the nominal convergence rate and $p>0$ penalizes
relaxation. Safety is enforced as a hard constraint, while tracking is
sacrificed only as needed.

\subsection{Poisson Safety Functions}

Definition~\ref{eq:cbf} requires a continuously differentiable $h$
with non-vanishing gradient on the safe set boundary. 
Poisson Safety Functions (PSFs) \cite{bahati2025dynamic} provide a systematic method to synthesize smooth CBFs $h_0$ directly from perception data.

\begin{definition}(Poisson Safety Function \cite{bahati2025dynamic})
Let $\Oc \subset \re^3$ denote the open, bounded, and connected free space with smooth boundary $\pOc$ corresponding to obstacle surfaces. We call $h_0:\re^3 \rightarrow \re$ a Poisson Safety Function if it is the unique solution to the Dirichlet problem for Poisson's equation:
\begin{gather}\label{eq: poisson's eq}
\!\!\!\left \{
    \begin{aligned}
        \frac{\partial^2 h_0}{\partial x^2}(\by) + \frac{\partial^2 h_0}{\partial y^2}(\by) + \frac{\partial^2 h_0}{\partial z^2}(\by) &= f(\by)&  \forall \by \in \Oc,\\
        h_0(\by) &= 0 &  \forall \by \in \pOc, \\
    \end{aligned}
    \right.
\end{gather}
where $f: \Oc \rightarrow \re_{<0}$ is the prescribed forcing function.  
\end{definition}

The Dirichlet boundary conditions enforce the barrier function's zero level set to lie on the obstacle surfaces while choosing $f<0$ forces $h_0 > 0$ in free space. Choosing a smooth forcing function $f \in C^{\infty}(\Omega)$ yields $h_0\in C^{\infty}(\Omega_c)$ \cite{gilbarg1977elliptic}.
As shown in \cite{bahati2025dynamic}, $h_0$ is a valid CBF for a single integrator system, $h_0 := h$, and can be further extended to higher order systems due to its smoothness. A single Poisson solution yields a globally smooth CBF for arbitrary environmental occupancy maps, defining the PSF. Considering a time-varying domain $\Oc = \Oc(t)$, we can extend Equation~\eqref{eq: poisson's eq} to moving obstacles, giving a moving boundary value problem solved online to yield $t\mapsto h_0(\by,t)$ \cite{bena2025geometry}.

\subsection{Application to Manipulators and VLA Policies}

Following \cite{SingletaryMolnarSafetyCriticalFood, wilkinson2026fullbodydynamicsafetyrobot}, we enforce safety of the manipulator at the kinematic level. That is we consider the system
\begin{equation}
    \dot{\bq} = \bv,
    \label{kinematic}
\end{equation}
with the commanded joint velocity as the control input $\bv \in \mathbb{R}^n$. Safety guarantees for \eqref{kinematic} extend to the full second-order manipulator dynamics provided the commanded velocity is sufficiently tracked by the low-level controller \cite{SingletaryMolnarSafetyCriticalFood}. 
Safety specifications for manipulators are naturally expressed in
task space, whereas the control input in \eqref{kinematic} acts at
the joint-velocity level. The two are related by forward kinematics:
for the $i$-th point on the robot body, let
$\varphi_i:\mathbb{R}^n\to\mathbb{R}^3$ denote its kinematic map, so
that $\by_i=\varphi_i(\bq)$ and,
$\dot{\by}_i\! =\! \tfrac{\partial\varphi_i}{\partial\bq}\dot{\bq}
\!=\! J_i(\bq)\bv$, with $J_i(\bq)\in\mathbb{R}^{3\times n}$ the corresponding Jacobian. Defining $h_i(\bq,t)\!\triangleq \!h(\varphi_i(\bq),t)$, the safety
condition becomes
\begin{equation}
\dot h_i(\bq,t,\bv)
= \nabla h\big(\varphi_i(\bq),t\big)^{\!\top} J_i(\bq)\,\bv
+ \frac{\partial h}{\partial t}\big(\varphi_i(\bq),t\big),
\label{eq:hdot}
\end{equation}
where $\nabla h$ is the gradient with respect to the
task-space argument. Note that \eqref{eq:hdot} is affine in $\bv$. We
assume $J_i(\bq)^{\!\top}\nabla h\! \neq\! 0$ whenever
$h_i(\bq,t)\!=\!0$, so that each $h_i$ is a valid CBF for
\eqref{kinematic}. Since the manipulator body is a continuum,
enforcing \eqref{eq:hdot} at every point yields a semi-infinite
program; instead we impose it pointwise on a finite sample set
$\mathcal{Y}(\bq)\!=\!\{\varphi_i(\bq)\}_{i=1}^{N}$, each point inducing
its own constraint:
\begin{equation}
\begin{aligned}
    (\bv_{\mathrm{safe}}, \delta^*) =
    \underset{\bv \in \re^n,\, \delta \geq 0}{\arg\min} \quad
    & \| \bv - \bv_{\mathrm{nom}} \|_2^2 + p\,\delta^2 \\
    \text{s.t.} \quad
    & \dot h_i(\bq,t,\bv) \geq -\alpha_i h_i(\bq,t) \\
    & \dot V(\bq,\bv) \leq -\gamma V(\bq) + \delta,
\end{aligned}
\tag{MC-CBF-QP}
\label{eq:mc-cbf-qp}
\end{equation}
where $\bv_{\mathrm{nom}}$ is the nominal joint velocity command,
$\alpha_i > 0$, and $\gamma, p > 0$ are as in \eqref{eq:clfcbfqp}. 

Enforcing safety at these finite points does not provide any safety guarantees for the full robot body. Prior work \cite{wilkinson2026fullbodydynamicsafetyrobot} combats this issue by sampling the robot surface at a fixed resolution $\varepsilon$ and buffering the occupancy map by that resolution. Solving Poisson's equation on this buffered domain gives the PSF  $h_\varepsilon(\mathbf{y})$. Satisfying the safety filter formulation of (\ref{eq:cbf}) with the buffered safety function renders the entire
robot surface collision-free with respect to the true environment \cite{wilkinson2026fullbodydynamicsafetyrobot}. Any
point set meeting the $\varepsilon$ bound inherits this safety guarantee, which we exploit in Section \ref{inject_Section}. 

It remains to specify the nominal control input. Let $\mathcal{O} = \mathcal{I}^{n_c} \times \mathcal{L} \times \mathbb{R}^n$ denote the observation space; $\mathcal{I}$ is the RGB input from $n_c$ camera images, $\mathcal{L}$ is the natural language instruction and $\mathbb{R}^n$ is the robot state. Then, a VLA policy is a map $\bm{\pi}:\ \mathcal{O} \rightarrow \mathbb{R}^{T\times m}$ such that a single observation $o_t \in \mathcal{O}$ is mapped to a size-$T$ action chunk $\ba_{1:T} = \bm{\pi}(o_t)$ with each action $\ba_k  \in \mathbb{R}^m$. Each action chunk is executed open-loop between fresh inference calls at a rate of $f_\pi$. The VLA policy is trained on demonstrations, and has no input representation of free space, nor mechanism of enforcing
safety constraints out of its training distribution. We provide a method to enforce full-body safety on a nominal policy in an arbitrary environment while preserving its task performance.

\color{black}
\section{Method}

In this section we present a method for safety filtering VLA policies using Poisson Safety Functions. Building on previous full-body safety work \cite{wilkinson2026fullbodydynamicsafetyrobot}, we apply a CBF-CLF safety filter to enforce safety constraints while tracking the VLA's desired trajectory. We first present the core safety-filtering architecture, then extend it to incorporate grasp-triggered sample injections to keep objects the robot has picked up safe. Finally, we introduce a multigrid Poisson solve, deployed in our hardware experiments, to achieve both precision and speed in cluttered, dynamic environments.

\subsection{VLA + Safety Filter}

Safety is integrated directly over the VLA's action output to ensure full-body safety during deployment. Filtering each action independently risks compounding the tracking error and losing the VLA's semantic intent as filtered actions diverge from the policy's plan \cite{romer2025pacs}. Instead, we treat the VLA as a high-level planner; given visual and language inputs, the policy outputs an action chunk $\mathbf{a}_{1:T}$ over a $T-$step horizon. We interpret this as a desired continuous trajectory $\bold{x}_d(t)$, obtained by linearly interpolating between the way-points of the action chunks, which is tracked by a low-level safety filter operating at a faster control rate.

The tracking error is defined
\[
\be = \begin{bmatrix}
    \bx_d - \bx_{ee} \\ \text{rpy}_{\text{err}} \ 
\end{bmatrix} \in \mathbb{R}^6.
\]
The variable $\text{rpy}_{\text{err}}$ represents the wrapped angular error between the desired and current orientation and $\bx_{ee} \in \mathbb{R}^3 $ is the current end-effector position. We define $V(e) = \frac{1}{2}e^TWe,$ with $ W\succ0$ required for a valid CLF.

Treating the reference trajectory as quasi-static, the tracking error derivative simplifies to $\dot{V} \!= \!- \be^TW^TJ_{ee}(\bq)\bu $, where $J_{ee}(\bq)$ is the end-effector Jacobian and $\bu\in\mathbb{R}^7$ is the commanded joint velocity. In the absence of safety considerations and kinematic limits, this condition drives the tracking error to zero. Strict convergence can conflict with safety constraints, hence we execute it slacked as in the CLF-CBF QP (\ref{eq:clfcbfqp}).

\subsection{Grasp-triggered Sample Injection}\label{inject_Section}

Full body safety, in the context of manipulation, requires consideration of grasped objects to ensure complete collision avoidance. An item held by the robot  may collide with the environment, and thus, it should affect the safety constraints in (\ref{eq:mc-cbf-qp}). 
Existing learning-based safety filters utilize configuration-space representations \cite{long2026neuralconfigurationspacebarriersmanipulation,chen2026cssdfnetsafemotionplanning}, which incorporate geometry during training or representation construction, making dynamic changes in the manipulated geometry difficult to accommodate \cite{hu2025vlsa}. 
Geometric safety filters can dynamically account for grasped objects by adding bounding spheres \cite{morton2025safetaskconsistentmanipulationoperational}; these representations can be substantially over-approximated. In cluttered environments, this approximation may classify collision-free motions as unsafe.

The key advantage of a sample-based safety representation is that samples are used to represent generic geometric configurations and not tailored to specific robot geometry. 
%
%
A rigidly grasped object can be incorporated into the sample-based
representation by augmenting the robot's sample set without changing the controller's formulation. 
Let $t_g$ define the time at which grasping onset is observed and let $\mathcal{C}(t)\subset\Omega$ denote the occupied set of the environment
representation at time $t$. Rather than a tight pose estimate, we only require a bounding volume of the grasped object: a compact set $\hat{\mathcal{B}}_O^E$, expressed in end-effector coordinates, containing the object.

\begin{proposition}[Safety of a Grasped Object]
\label{cor:injection}
Suppose that $\forall t\ge t_g$ the object remains rigidly fixed to the
end-effector $\{E\}$, and is contained in a fixed compact bounding volume $\hat{\mathcal{B}}_O^E$ expressed in end-effector coordinates.
Let $O^E=\{o_1^E,\dots,o_M^E\}$ be an $\varepsilon_O$-cover
of $\partial\hat{\mathcal{B}}_O^E$ with $\varepsilon_O\le\varepsilon$, define
\[
o_i(\mathbf{q})=p_E(\mathbf{q})+R_E(\mathbf{q})\,o_i^E,
\qquad
\mathcal{Y}_{\mathrm{aug}}(\mathbf{q})=\mathcal{Y}(q)\cup O(\mathbf{q}).
\]
 Assume $\hat{\mathcal{B}}_O(\bq(t_g))\cap\mathcal{C}(t_g)=\emptyset$. If \eqref{eq:mc-cbf-qp} is enforced for every
$y\in\mathcal{Y}_{\mathrm{aug}}$, then the robot surface and the bounding volume $\hat{\mathcal B}_O(\mathbf{q})$ are collision free with respect to the true environment, and hence so is the true object for all $t\ge t_g$.
\end{proposition}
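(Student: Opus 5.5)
The plan is to reduce the statement to the buffered full-body guarantee of \cite{wilkinson2026fullbodydynamicsafetyrobot} recalled in the preliminaries. That guarantee says that any point set forming an $\varepsilon$-cover of a surface, whose points all remain in the zero-superlevel set of the buffered PSF $h_\varepsilon$, certifies that the covered surface is collision free with respect to the true (unbuffered) environment. So it suffices to check two things for $t\ge t_g$: that $O(\bq)$ is an $\varepsilon$-cover of $\partial\hat{\mathcal{B}}_O(\bq)$, and that every point of $\mathcal{Y}_{\mathrm{aug}}(\bq(t))$ satisfies $h_\varepsilon\ge 0$. The robot samples $\mathcal{Y}(\bq)$ are handled exactly as in the prior work, so the new content concerns only the object samples.

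\textbf{Cover property is preserved.} The map $o^E\mapsto p_E(\bq)+R_E(\bq)o^E$ is a rigid motion, since $R_E(\bq)\in SO(3)$, and is therefore an isometry of $\re^3$. Rigidity of the grasp means $\hat{\mathcal{B}}_O(\bq)$ is the image of $\hat{\mathcal{B}}_O^E$ under this map, and its boundary is the image of $\partial\hat{\mathcal{B}}_O^E$. Hence for every $y\in\partial\hat{\mathcal{B}}_O(\bq)$ there is an $o_i(\bq)$ with $\|y-o_i(\bq)\|=\|y^E-o_i^E\|\le\varepsilon_O\le\varepsilon$. This shows $O(\bq)$ is an $\varepsilon$-cover of the transported boundary, uniformly in $\bq$.

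\textbf{Samples stay safe.} Each $o_i(\bq)$ is a forward-kinematic point $\varphi_i(\bq)$, obtained by composing $p_E,R_E$ with a constant offset. The constraint \eqref{eq:hdot} therefore takes the same affine form, with Jacobian $J_i=J_{p_E}+\partial(R_E o_i^E)/\partial\bq$. Enforcing \eqref{eq:mc-cbf-qp} for each such point makes $t\mapsto h_\varepsilon(o_i(\bq(t)),t)$ satisfy $\dot h\ge-\alpha_i h$, and the comparison lemma yields $h_\varepsilon(o_i(t),t)\ge h_\varepsilon(o_i(t_g),t_g)e^{-\alpha_i(t-t_g)}$. The remaining requirement is initial safety, $h_\varepsilon(o_i(t_g),t_g)\ge0$.

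\textbf{Main obstacle: initial safety.} The hypothesis $\hat{\mathcal{B}}_O(\bq(t_g))\cap\mathcal{C}(t_g)=\emptyset$ only says the box avoids the true occupancy, not the $\varepsilon$-buffered one. I would first argue that the buffered guarantee of the prior work only needs sample points in the zero-superlevel set of the PSF solved on the buffered domain at $t_g$. If that fails, I would strengthen the reading of the hypothesis to disjointness from the $\varepsilon$-buffered occupancy. In the latter case $o_i(t_g)\in\overline{\Omega_\varepsilon}$, so $h_\varepsilon(o_i(t_g),t_g)\ge0$ by the maximum principle, since $f<0$. This also needs the standing assumption that $J_i^\top\nabla h_\varepsilon\ne0$ on the boundary, so each new $h_i$ is a valid CBF, together with feasibility of the augmented QP; both are implicit in ``\eqref{eq:mc-cbf-qp} is enforced.''

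\textbf{Conclusion.} Applying the buffered full-body result to $\mathcal{Y}(\bq)$ and $O(\bq)$ shows that the robot surface and $\partial\hat{\mathcal{B}}_O(\bq)$ avoid the true obstacles. Since $\hat{\mathcal{B}}_O$ is compact and initially disjoint from the obstacles, and obstacles move continuously, a collision would have to occur first through the boundary. The box interior therefore stays free as well. Finally, the true object is contained in $\hat{\mathcal{B}}_O(\bq)$, so it is collision free for all $t\ge t_g$.
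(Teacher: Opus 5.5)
Your proposal is correct and follows the paper's proof essentially step for step: rigid motions preserve the $\varepsilon_O$-cover, $\varepsilon_O\le\varepsilon$ reduces the claim to the sampling condition of the buffered full-body theorem, forward invariance comes from an initially free bounding volume, and containment of the object in $\hat{\mathcal{B}}_O$ finishes the argument. You are more explicit than the paper on two points it passes over: that invariance needs the samples to start in the $\varepsilon$-buffered safe set, so $\mathcal{C}(t_g)$ must be read as the buffered occupancy of the environment representation (your second option works; your first only restates the requirement), and that boundary safety plus initial disjointness is what keeps the box interior free.
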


\begin{proof}
Rigid motions are isometries, so $O(\bq)$ remains an $\varepsilon_O$-cover of $\partial\hat{\mathcal{B}}_O(\bq)$ for every $q$. Since $\mathcal{Y}(\mathbf{q})$ satisfies the $\varepsilon$-sampling condition by construction and $\varepsilon_O\!\leq\!\varepsilon$, the augmented set $\mathcal{Y}_{\mathrm{aug}}(q)$ satisfies the sampling condition required by Theorem~1 in \cite{wilkinson2026fullbodydynamicsafetyrobot}. This establishes forward invariance of the safe set for the robot and the
represented object geometry. Since $\hat{\mathcal{B}}_O(\bq(t_g))$ is initially
free of occupancy, invariance precludes any subsequent intersection with
$\mathcal{C}$, and containment of the object in $\hat{\mathcal{B}}_O$ transfers
the conclusion to the object itself.
\end{proof} 

\begin{remark}[Practical Initial Conditions]
At pick up, objects are inherently resting in contact and unsafe, thus not in the forward invariant set. Enforcing CBF conditions as in \ref{eq:mc-cbf-qp} provides convergence to the safe set \cite{ADA-SC-ME-GN-KS-PT:19}.
\end{remark}

\begin{figure}[t]
    \centering
    \includegraphics[width=\columnwidth]{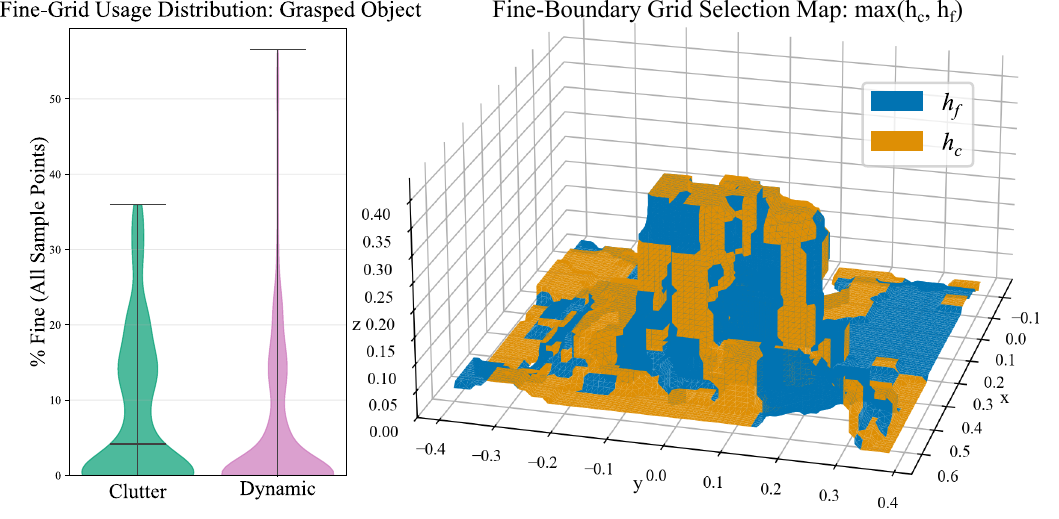}
    \caption{Multigrid Usage. \textbf{(Left)} Fine-grid usage (\% of sample points) for the grasped object during a trial in static clutter and with dynamic obstacles. The grasped object shows consistent moderate usage under clutter but sparser, higher peaks under moving obstacles. \textbf{(Right)} Zero-level set of the fine PSF $h_f$ in a cluttered environment showing which grid wins at each point.}
    \label{fig:finevscoarse}
\vspace{-5mm}
\end{figure}

\subsection{Multi-resolution Poisson} \label{multigrid}

The proposed safety filter is formulated in continuous space; but implementing Poisson Safety Functions requires discretizing the workspace into an occupancy grid. Discretization introduces a tradeoff between speed and grid resolution. A fine resolution gives a precise occupancy grid but causes a significant increase in solve time. Fast solve times are required for moving obstacles, but for clutter high resolution is key. On hardware, mitigating this trade off is prohibitively computationally expensive.

We therefore construct two safety functions at different resolutions:
a coarse function over a large domain $\Omega_c \subset \re^3$ with
grid spacing $\delta_c$, and a fine function over a nested region
$\Omega_f \subseteq \Omega_c$ with $\delta_f < \delta_c$, placed over
the task region. Occupancy is transferred conservatively: a coarse
cell is marked occupied if any fine cell within is occupied.
Writing $\mathcal{O}_i$ and $\mathcal{F}_i$ for the occupied and free
regions at level $i$, this gives $\mathcal{O}_f \subseteq
\mathcal{O}_c$, and hence $\mathcal{F}_c \subseteq \mathcal{F}_f$ on
$\Omega_f$. Solving Equation (\ref{eq: poisson's eq}) on each domain yields $h_c(\by)$ and
$h_f(\by)$. The fine solve imposes zero Dirichlet boundary conditions on the artificial region boundary $\partial \Omega_f$, thus $h_f$ vanishes there even where the workspace is free; switching to $h_f$ inside the region would therefore report spurious unsafety at the grid walls and forfeit the continuity that Poisson safety functions provide. We instead compose the two levels using the Boolean OR construction of \cite{Ong_2025}, letting
\begin{equation}
  h(\by) = \max\big( h_c(\by),\, h_f(\by) \big),
  \label{eq:or-composition}
\end{equation}
and defining $\Delta_i(\by) \triangleq | h(\by) - h_i(\by) |$
for $i \in \{f, c\}$. Rather than differentiating the nonsmooth
maximum, we impose one constraint per level, each relaxed by its own
difference, resulting in the safety filter
\begin{equation}
\begin{aligned}
  \bk(\by,t) = \argmin_{\bu \in \re^m} \;
    & \|\bu - \bk_{\mathrm{nom}}(\bx,t)\|_2^2 \\
  \mathrm{s.t.} \;
    & \dot h_f(\by,t,\bu) \geq -\alpha\big( h + \Delta_f \big) \\
    & \dot h_c(\by,t,\bu) \geq -\alpha\big( h + \Delta_c \big),
\end{aligned}
\tag{CBF-QP} \label{eq:safety-filter}
\end{equation}
enforced at every query point $\by \in \mathcal{Y}_{\mathrm{aug}}(q)$
in the augmented sample set, yielding $2|\mathcal{Y}_{\mathrm{aug}}|$ total constraints. Whichever level attains the maximum has
$\Delta_i = 0$ and recovers the nominal CBF condition, while the
inactive level is slackened in proportion to its gap. Each $\Delta_i$ decays continuously to zero as the levels cross, hence the filtered input varies continuously throughout the transition, and the coarse level certifies safety near the fine window's boundary without
any explicit mode switch. Figure \ref{fig:finevscoarse} demonstrates where the fine grid safety constraint overrides the coarse. The number of constraints is doubled in the MC-CBF-QP, however solve time is still under $2$ms for $2$cm sample points, with the Poisson solve time under $5$ms (Table \ref{tab:solve-scaling}).

\begin{table}[t]
  \centering
  \caption{Solve cost  for a fixed workspace extent (\SI{2.4}{\meter} per side)
    and a fixed \num{100}-iteration budget. Mean $\pm$ std over $27$--$1708$
    solves per configuration on an RTX 4090, with
    $|\mathcal{Y}_\mathrm{aug}| = 679$. The total time includes all numerical processing. The multigrid scheme resolves
    \SI{1.0}{\centi\meter} over the region of interest at coarse-grid cost;
    it doubles the number of QP constraints, but latency is dominated
    by the Poisson solve.}
  \label{tab:solve-scaling}
  \footnotesize
  \begin{tabular*}{\columnwidth}{@{\extracolsep{\fill}}l c c c@{}}
    \toprule
    Grid ($\varepsilon$, $N^3$) & Poisson [ms] & QP [ms] & Total [ms] \\
    \midrule
    \multicolumn{4}{@{}l}{\textit{Single uniform}} \\
    \SI{3.0}{\centi\meter}, $80^3$  & $3.44 \pm 0.23$   & $0.72 \pm 0.02$ & $6.5 \pm 1.9$ \\
    \SI{2.0}{\centi\meter}, $120^3$ & $9.12 \pm 0.44$   & $0.76 \pm 0.04$ & $23.1 \pm 6.6$ \\
    \SI{1.5}{\centi\meter}, $160^3$ & $20.20 \pm 0.46$  & $0.78 \pm 0.12$ & $38.5 \pm 5.7$ \\
    \SI{1.0}{\centi\meter}, $240^3$ & $91.80 \pm 0.95$  & $0.61 \pm 0.07$ & $261.6 \pm 9.0$ \\
    \SI{0.5}{\centi\meter}, $480^3$ & $724.19 \pm 3.71$ & $0.59 \pm 0.03$ & $2059 \pm 33$ \\
    \midrule
    \multicolumn{4}{@{}l}{\textit{Proposed (multigrid)}} \\
    \SI{3.0}{}/\SI{1.0}{\centi\meter}, $80^3{+}50^3$
      & $4.75 \pm 0.48$ & $1.35 \pm 0.19$ & $8.5 \pm 1.9$ \\
    \bottomrule
  \end{tabular*}
\vspace{-3mm}
\end{table}

\section{Results}

\subsection{Simulation Benchmark}

\begin{table}[t]
\centering
\caption{VLPSA vs.\ VLSA$^*$~\cite{hu2025vlsa} (full action space) and
nominal $\pi_{0.5}$ on SafeLIBERO (Levels I+II).
VLPSA-NI ablates injection of held-object samples into the safe set.
Success (SR), Avoidance (CAR, $100-$ collision rate), and Safe Success (SSR)
rates in \%; Steps is the mean number of steps to reach success, over
successful episodes. $\pm$ denotes a 95\% confidence interval (normal
approximation; $n{=}400$ episodes per suite). Bold marks the best method where
its interval does not overlap that of the runner-up.}
\label{tab:comparison}
\small
\begin{tabular*}{\linewidth}{@{\extracolsep{\fill}}l r r r r@{}}
\toprule
Method & SR\,$\uparrow$ & CAR\,$\uparrow$ & SSR\,$\uparrow$ & Steps\,$\downarrow$ \\
\midrule
\multicolumn{5}{@{}l}{\emph{Spatial}}\\
Nominal   & 57.8\,$\pm$\,4.8          & 20.8\,$\pm$\,4.0          & 19.2\,$\pm$\,3.9          & 119.5\,$\pm$\,6.3 \\
VLSA$^*$  & 73.8\,$\pm$\,4.3          & 70.5\,$\pm$\,4.5          & 59.8\,$\pm$\,4.8          & 133.4\,$\pm$\,4.2 \\
VLPSA-NI  & 71.5\,$\pm$\,4.4          & 75.8\,$\pm$\,4.2          & 54.5\,$\pm$\,4.9          & 123.0\,$\pm$\,5.9 \\
VLPSA     & 72.2\,$\pm$\,4.4          & \textbf{89.8}\,$\pm$\,3.0 & 66.5\,$\pm$\,4.6          & 136.9\,$\pm$\,5.9 \\
\addlinespace
\multicolumn{5}{@{}l}{\emph{Goal}}\\
Nominal   & 49.8\,$\pm$\,4.9          & 27.8\,$\pm$\,4.4          & 20.0\,$\pm$\,3.9          & \textbf{116.9}\,$\pm$\,7.5 \\
VLSA$^*$  & \textbf{85.2}\,$\pm$\,3.5 & 88.2\,$\pm$\,3.2          & \textbf{76.8}\,$\pm$\,4.1 & 134.7\,$\pm$\,4.9 \\
VLPSA-NI  & 55.5\,$\pm$\,4.9          & 82.2\,$\pm$\,3.7          & 44.5\,$\pm$\,4.9          & 132.4\,$\pm$\,7.5 \\
VLPSA     & 57.8\,$\pm$\,4.8          & 86.8\,$\pm$\,3.3          & 53.2\,$\pm$\,4.9          & 147.5\,$\pm$\,7.2 \\
\addlinespace
\multicolumn{5}{@{}l}{\emph{Object}}\\
Nominal   & 49.5\,$\pm$\,4.9          & 26.8\,$\pm$\,4.3          & 20.8\,$\pm$\,4.0          & 143.0\,$\pm$\,6.7 \\
VLSA$^*$  & \textbf{68.2}\,$\pm$\,4.6 & 73.5\,$\pm$\,4.3          & \textbf{55.0}\,$\pm$\,4.9 & 170.1\,$\pm$\,5.0 \\
VLPSA-NI  & 38.0\,$\pm$\,4.8          & 79.8\,$\pm$\,3.9          & 25.2\,$\pm$\,4.3          & 154.3\,$\pm$\,9.2 \\
VLPSA     & 39.0\,$\pm$\,4.8          & \textbf{94.2}\,$\pm$\,2.3 & 38.2\,$\pm$\,4.8          & 148.6\,$\pm$\,7.9 \\
\addlinespace
\multicolumn{5}{@{}l}{\emph{Long}}\\
Nominal   & 39.5\,$\pm$\,4.8          & 17.0\,$\pm$\,3.7          & 13.2\,$\pm$\,3.3          & 297.9\,$\pm$\,14.2 \\
VLSA$^*$  & 45.0\,$\pm$\,4.9          & 59.8\,$\pm$\,4.8          & 26.0\,$\pm$\,4.3          & 343.7\,$\pm$\,11.8 \\
VLPSA-NI  & 43.8\,$\pm$\,4.9          & 69.0\,$\pm$\,4.5          & 27.8\,$\pm$\,4.4          & 267.7\,$\pm$\,11.8 \\
VLPSA     & 53.0\,$\pm$\,4.9          & \textbf{94.0}\,$\pm$\,2.3 & \textbf{50.2}\,$\pm$\,4.9 & 273.3\,$\pm$\,11.3 \\
\midrule
\multicolumn{5}{@{}l}{\emph{Overall}}\\
Nominal   & 49.1\,$\pm$\,2.4          & 23.1\,$\pm$\,2.1          & 18.3\,$\pm$\,1.9          & 160.6\,$\pm$\,6.5 \\
VLSA$^*$  & \textbf{68.1}\,$\pm$\,2.3 & 73.0\,$\pm$\,2.2          & 54.4\,$\pm$\,2.4          & 177.8\,$\pm$\,5.4 \\
VLPSA-NI  & 52.2\,$\pm$\,2.4          & 76.7\,$\pm$\,2.1          & 38.0\,$\pm$\,2.4          & 161.5\,$\pm$\,5.6 \\
VLPSA     & 55.5\,$\pm$\,2.4          & \textbf{91.2}\,$\pm$\,1.4 & 52.1\,$\pm$\,2.4          & 174.3\,$\pm$\,5.5 \\
\bottomrule
\end{tabular*}
\vspace{-5mm}
\end{table}

\color{black}
Simulation benchmarks are used for analysis of safety-filtering capabilities. SafeLIBERO~\cite{hu2025vlsa} augments LIBERO task suites~\cite{liu2023libero} with obstacle-rich variants across two difficulty levels (16 tasks, 32 scenarios, 50 episodes each). The suites isolate distinct sources of variation: \textit{Spatial} (varied layout), \textit{Object} (varied target object), \textit{Goal} (varied task command), and \textit{Long} (chained subgoals). Level~I places obstacles near target objects, whereas Level~II places obstacles along the transport path. Level~II challenges end-effector-only filters, as safety bounds must continuously protect the geometry of the held object during transit. Table \ref{tab:comparison} shows the combined results.

Fine-tuning VLAs on obstacle-rich data is costly and insufficient for hard safety guarantees, as safety degrades under spatial reconfigurations or unseen obstacle geometries~\cite{cui2026liberosafetycomprehensivebenchmarkphysical}. Conversely, zero-shot models perform poorly; for instance, the LIBERO-finetuned $\pi_{0.5}$ checkpoint~\cite{intelligence2025pi05visionlanguageactionmodelopenworld} exhibits high collision rates when deployed on SafeLIBERO (Table~\ref{tab:comparison}).

VLSA~\cite{hu2025vlsa} serves as the primary baseline safety filter. It prompts a vision-language model to identify the obstacle most probable to cause a collision, grounds it in 3D from RGB-D, and fits a minimum-volume enclosing ellipsoid to enforce end-effector collision avoidance. We denote the authors' implementation as VLSA*, re-evaluated under our standardized protocol; its metrics reflect direct comparability across Table~\ref{tab:comparison} rather than originally reported values. While unreleased method implementations assign per-obstacle ellipsoids~\cite{wang2026vlaknowslimitsadaptive}, multi-obstacle constraints can scale poorly and require complex relaxations~\cite{molnar2023composingcontrolbarrierfunctions}. Furthermore, single-ellipsoid abstractions over-approximate concave geometries, causing admissible control sets to collapse. No prior work addresses full-body collision avoidance with held-object protection, which we tackle using PSFs~\cite{wilkinson2026fullbodydynamicsafetyrobot}.

All methods in Table~\ref{tab:comparison} are evaluated under a single protocol that updates the original VLSA evaluation in four key respects. An expanded collision  detection scheme scans all simulation contacts and distinguishes visual mesh intersections from MuJoCo \cite{todorov2012mujoco} collision geometry artifacts. We slightly modify the success criteria parameters to align the evaluation metrics with qualitative semantic task completion. To avoid token use, we replace online vision language model (VLM) obstacle identification with ground-truth labels. Finally, all controllers run at 100 Hz. While absolute metrics differ from the prior literature due to these updates, cross-method comparisons in Table~\ref{tab:comparison} remain self-consistent.
\color{black}

\subsection{Simulation Discussion}
Safety filtering markedly improves safety and benefits performance, but there are trade offs. Note that safe success rates do not meaningfully differ between VLSA* and VLPSA overall, but VLPSA results in \textbf{three times fewer collisions}. Losing unsafe success is a cost to conservative safety filtering, but collision avoidance is a necessary trade off for many deployments. Additionally, VLSA* and VLPSA both meaningfully improve raw success rates, showing that in aggregate there is not necessarily a success cost to filtering. In the object suite, while VLPSA reduces the success rate from nominal, the safe success rate nearly doubles. Qualitatively, failures arise when the VLA is filtered off its path and attempts to pick up an object it is filtered from touching, showing a vision-language failure but highlighting the forward invariance of the safe set under VLPSA deployment. Some unnecessary failures occur, particularly with an object in hand, when a long horizon plan would be required while the VLA's short term goal is fundamentally unsafe. 

A note on the fallibility of simulation is warranted. MuJoCo \cite{todorov2012mujoco} resolves contact against simplified collision geometries that can extend beyond the
rendered visual mesh.  Hence, contacts can be reported while objects remain
visibly separated. Filtering this does not preclude contact from being recorded, 89\% of VLA collisions cannot be rejected, having real penetration, versus only 41\% of VLPSA collisions. This clear discrepancy shows this is a differentiating metric, while not perfect. All analysis therefore counts only \emph{unrejectable}
collisions: those in which the visual meshes penetrate, or in which MuJoCo
records no contact pair yet the obstacle is displaced. The latter case accounts
for fewer than 1 in 10 collisions tracked. While we consider the unrejectable-collision count the more faithful metric, we note that the set of statistically separated results in Table \ref{tab:comparison} is unchanged under the
original 1\,mm displacement criterion: every bolded entry in
Table~\ref{tab:comparison} remains bolded, and no new one appears. The leading method changes in two cells, Goal CAR and overall SSR, both moving in VLPSA's favor; in neither case does the difference exceed the confidence intervals.

A full-body safety filter with complete environment coverage should, in
principle, produce no unrejectable collisions. Three effects prevent this in
practice. Perception, even the noiseless perception available in simulation,
cannot protect against obstacles that are never observed, nor against a grasped
object that slips substantially within the gripper after the grasp is
established. The third effect is a simulation artifact: contact with collision geometry,
which the filter neither prevents nor is designed to prevent, engages contact
dynamics that invalidate the velocity-tracking assumption upon which the filter
rests, and the resulting tracking error can carry the visual mesh into a
penetration.

Much of the benefit of VLPSA is erased without accounting for the held object (VLPSA-NI) as shown in Table \ref{tab:comparison}, and qualitatively from scene construction, end effector collisions are mostly what stands to be avoided. Dynamic full body collision avoidance is practical using our method, which is not fully exercised in this benchmark. We therefore weight hardware deployments to establish the method's practicality. 

\subsection{Hardware Experiments}\label{Hardware}

We validate the method on a Franka FR3 to demonstrate that the filter
transfers to real hardware. The safe set is constructed from real
perception data rather than simulated, and the Poisson solve must keep
pace with the policy at deployment rates. The experiments
here establish that the full perception-to-filter pipeline closes the loop on
physical hardware, including dynamic obstacle avoidance.

\begin{figure*}[t]
  \centering
  \includegraphics[width=\textwidth]{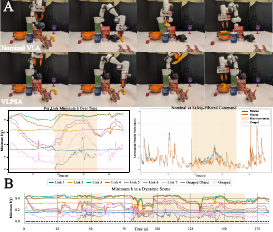}
  \caption{ (A) Hardware validation of the VLA policy with and without the proposed safety layer in a cluttered tabletop environment for the task \textit{``Pick up the banana.''} \textbf{(Top)} The cluttered environment is out-of-distribution for the nominal VLA policy, which completes the task but collides with multiple objects in the scene. \textbf{(Middle)} With the proposed full-body safety architecture, the VLA policy completes the same task while avoiding all collisions. \textbf{(Bottom Left)} The minimum active Poisson Safety Function value per link remains positive through-out the trial. \textbf{(Bottom Right)} The nominal trajectory versus the safety filtered trajectory throughout the trial. (B) The minimum value of the PSF per-link throughout a long horizon trial with the same text prompt as above. This dictates two successful executions of the task in clutter with dynamical obstacles.}
\label{fig:clutter-strip}
\vspace{-5mm}
\end{figure*}

\subsubsection{Perception to Occupancy Map} \label{sec:perception}
To deploy safety filtering on hardware, a dynamic occupancy map is required for the multi-resolution Poisson solution (Section \ref{multigrid}).
Following OctoMap \cite{hornung13auro}, we track per-voxel occupancy in
log-odds, incrementing on a sensor hit and decrementing when a ray passes
through, and mark a cell occupied above a probability threshold. Because the
base is fixed and the Poisson solver requires a uniform grid, we forgo the
octree and define occupancy on a $1.5 \times 1.5 \times 1.5$ m grid at 1 cm
resolution. Maps are built from two Hesai JT128
LiDARs and two ZED2i stereo cameras; parameters are given in the supplementary
code.

Feasible collision avoidance requires excluding the arm itself. We maintain a
\textit{robot mask} from the sampled surface points used in the safety filter
QP (Eqn. \ref{eq:mc-cbf-qp}), unioning $\varepsilon_E$ balls about each; any hit
within $\varepsilon_E$ of a sample is attributed to the robot and discarded. The
exclusion is inflated to $r_E = \varepsilon_E + 2$ cm to absorb arm motion
during a LiDAR sweep, extrinsic calibration drift, and depth noise. Provided no
obstacle initializes inside this buffer and the robot keeps its safe distance,
self-filtering discards no information. \textit{Flying pixels} and depth
uncertainty \cite{10.1145/1921264.1921293} are most damaging when they mark the
body or its immediate halo as occupied, producing infeasibility or self-evasion.
Per sensor, we therefore ignore LiDAR hits inside the occlusion shadow each
robot sample casts at the current pose, and camera hits inside an anisotropic
ellipse about each sample's projected pixel, elongated along the visual ray:
depth images carry error in range but not in pixel location, so a sample's
$(u,v)$ projection is exact while its measured depth is not. Exclusions are per sensor, so no region is blinded:
obstacles observed by another sensor are still respected.

Rather than raytrace, we forward-project: voxel centers transform into each
camera frame as $(u,v,d)$ and each LiDAR frame as $(\theta,\phi,r)$, both single
matrix operations. A voxel is cleared when all neighboring pixels or scans report
surfaces more than $\varepsilon_c$ beyond its depth, standing in for a ray
passing through the cell. Hits are recorded by rounding point clouds into voxel
coordinates, subject to the exclusions above. Maps update above 60 Hz, exceeding
both the ZED2i publish rate and the LiDAR's 20 Hz bound.

\subsubsection{Sample Injection} \label{sec:perceptionsmaps} To deploy the CBF-QP with the guarantees of Proposition \ref{cor:injection}, we require a bounding volume of the grasped object $\hat{\mathcal{B}}_O^E$  even under perturbation and sensing uncertainty. We define $d$ to be the maximum possible distance that any point on the held object's surface can move relative to the end-effector frame during the grasp. If, at any time during the grasp, all points $d$ away from the surface are in $\hat{\mathcal{B}}_O^E$, the held object will remain a subset for the grasp duration. This motivates our injected sample initialization, and carving. Given an object bounding box and some assumed $d$, inflating the box by $d$ yields a valid, conservative $\hat{\mathcal{B}}_O^E$. Points in the set can be removed at any time if there is no occupancy within $d$. In practice, we initialize with a $d$-expanded voxelized bounding box from YOLO \cite{ultralytics} and carve from perception data, resulting in $\hat{\mathcal{B}}_O^E$ \& object samples sets as shown in yellow and pink, respectively, in Fig \ref{fig:hero}. These samples can then be used to prevent marking the held object as occupied via the same sample method described above, preventing catastrophic self-evasion. The adaptability of this fast, sample based exclusion method is a direct benefit over using model based occupancy elimination. 

\subsubsection{VLA Training and Deployment} \label{vlahard}

To demonstrate our filter's operation over a learned nominal controller on hardware, we fine-tune $\pi_{0.5}$~\cite{intelligence2025pi05visionlanguageactionmodelopenworld} from the released base checkpoint on $430$ teleoperated demonstrations of a tabletop pick-and-place task (placing a banana into a bowl) collected on a Franka FR3 with compliant Fin Ray fingers. The policy was not trained with obstacles and no safety objective enters at training. The policy is queried at $10$Hz for an action chunk of length $16$ which is the nominal input. Occupancy is constructed from the perception stack of Sec.~\ref{sec:perception} at $40$ Hz. 
The coarse $80^3$ grid at a $3$cm resolution covers the entire workspace, and the fine $50^3$ grid at a $1$cm resolution covers the task subset. 
Poisson's equation is solved on both of these grids to produce the two safety functions of the space at $40$Hz. The safety filter runs at $150$Hz with the most recent Poisson solutions. Pick up and placement target bounding boxes, from YOLO \cite{ultralytics}, are used to clear the occupancy map, allowing for robot interaction with the objects. Currently, targets are identified from the task description, but could be semantically selected by a VLM. 

\subsubsection{Demonstration in Clutter}
First, we show the effectiveness of our method for VLA navigation in clutter. The nominal policy is finetuned only on unobstructed paths. As seen in the upper panel of Figure \ref{fig:clutter-strip}A, the robot collides with multiple obstacles in the scene and obstructs the placement object with the nominal VLA. In contrast, our method can safely navigate through clutter with a naive VLA policy. In the bottom panel of Figure \ref{fig:clutter-strip}A, our safety architecture is deployed on top of the nominal policy and  safely and successfully complete the task. In clutter, precise representations of the environment are required for navigation, especially near the obstacles, where the fine grid at a high resolution becomes engaged (see Figure \ref{fig:finevscoarse}).

\subsubsection{Dynamic Avoidance}

The method extends to dynamic scenes when using a nominal VLA policy. We extend the 3D perception stack from \cite{wilkinson2026fullbodydynamicsafetyrobot} for arbitrary dynamic environments. There is a sensing and reachability bound for the obstacle speed the filter can accommodate. First, the occupancy grid is updated at a finite rate, so an obstacle moving faster than the occupancy-to-PSF pipeline is represented with a delay proportional to its velocity. Even under exact occupancy, a fast moving obstacle can induce actions that require exceeding the robot's velocity limits which cannot be satisfied. Figure \ref{fig:clutter-strip}B shows the PSF value across a
continuous trial in which the FR3 completes two successful pick-and-place tasks while a human and ball obstacle repeatedly enter the workspace and attempt to interfere with the task. Grasped object sample points extend safety from the full robot body to the held banana throughout. The PSF value maintains its positive value throughout and task progress resumes once the workspace clears.


\section{Conclusion}

We have presented an architecture for full-body dynamic safety of VLA policies without retraining. The safety filter is modular and can be applied to general learned policies. Across a subset of tasks, the safety filter improves task success by preventing episode-ending collisions, while other tasks are hindered. Our method improved full-body collision avoidance in all cases.  We extend earlier results to provide safety guarantees for grasped objects and develop the perception and computational pipeline to enable real-time validated operation in dynamic and cluttered environments.

Future work will investigate how to close the success gap in the failure modes, perhaps by utilizing the VLA's plan in combination with modern control methods such as Model Predictive Control, or integrating safety earlier into the learned pipeline. The current perception pipeline is limited by observation occlusions, better shape prediction could improve or maintain safety under lower sensor burdens.



\vspace{2mm}
\noindent\textbf{Acknowledgments} - This research was supported by The Dow Chemical Company through project \#227027AW. Claude Opus 5 aided in coding and analysis.

\addtolength{\textheight}{-12cm}   





\bibliographystyle{IEEEtran}

\bibliography{main-GB}

\end{document}